\providecommand{\tabularnewline}{\\}
\theoremstyle{plain}
\newtheorem{prop}{\protect\propositionname}
\theoremstyle{plain}
\newtheorem{cor}{\protect\corollaryname}
\theoremstyle{remark}
\newtheorem{rem}{\protect\remarkname}
\crefname{section}{Sec.}{Secs.}
\Crefname{section}{Section}{Sections}
\Crefname{table}{Table}{Tables}
\crefname{table}{Tab.}{Tabs.}
\providecommand{\corollaryname}{Corollary}
\providecommand{\propositionname}{Proposition}
\providecommand{\remarkname}{Remark}
\begin{document}
\global\long\def\Real{\mathbb{R}}%
\global\long\def\Loss{\mathcal{L}}%
\global\long\def\W{\mathcal{W}}%
\global\long\def\Z{\mathcal{Z}}%
\global\long\def\s{\text{s}}%
\global\long\def\t{\text{t}}%
\global\long\def\id{\text{id}}%
\global\long\def\sty{\text{sty}}%
\global\long\def\Model{\text{AFS}}%

\title{Face Swapping as A Simple Arithmetic Operation}

\author{Truong Vu$^{1}$\\{\tt\small truong.vt183649@sis.hust.edu.vn} \and Kien Do$^{2}$\\{\tt\small k.do@deakin.edu.au} \and Khang Nguyen$^{1}$\\{\tt\small khang.nt183559@sis.hust.edu.vn} \and Khoat Than$^{1}$\\{\tt\small khoattq@soict.hust.edu.vn} \\
$^{1}$ Hanoi University of Science and Technology, Vietnam\\
$^{2}$ Applied Artificial Intelligence Institute (A2I2), Deakin University, Australia}
\maketitle

\begin{abstract}
We propose a novel high-fidelity face swapping method called \emph{``Arithmetic
Face Swapping''} (AFS) that explicitly disentangles the intermediate
latent space $\W^{+}$ of a pretrained StyleGAN into the \emph{``identity''}
and \emph{``style''} subspaces so that a latent code in $\W^{+}$
is the sum of an ``identity'' code and a ``style'' code in the
corresponding subspaces. Via our disentanglement, face swapping (FS)
can be regarded as a simple arithmetic operation in $\W^{+}$, i.e.,
the summation of a source ``identity'' code and a target ``style''
code. This makes AFS more intuitive and elegant than other FS methods.
In addition, our method can generalize over the standard face swapping
to support other interesting operations, e.g., combining the identity
of one source with styles of multiple targets and vice versa. We implement
our identity-style disentanglement by learning a neural network that
maps a latent code to a ``style'' code. We provide a condition for
this network which theoretically guarantees identity preservation
of the source face even after a sequence of face swapping operations.
Extensive experiments demonstrate the advantage of our method over
state-of-the-art FS methods in producing high-quality swapped faces.
Our source code was made public at \url{https://github.com/truongvu2000nd/AFS}

\end{abstract}

\section{Introduction\label{sec:Introduction}}

Face swapping is the task of transferring the identity from a source
face image to a target face image while preserving the attributes
(styles) of the target face image such as pose, facial expression,
lighting, and background. It has attracted a lot of attentions from
community recently thanks to its practical applications in entertainment
\cite{kemelmacher2016transfiguring,chen2020simswap} and privacy protection
\cite{rossler2019faceforensics++}.

A large number of face swapping methods have been proposed. Early
methods can only handle source and target faces with similar poses,
or source faces with known identities \cite{bitouk2008face,korshunova2017fast}.
Later methods are able to deal with source and target faces having
very different styles, and can even work in unconstrained settings
\cite{bao2018towards,chen2020simswap,li2019faceshifter,nirkin2018face,nirkin2019fsgan}.
Among them, GAN-based methods \cite{chen2020simswap,li2019faceshifter}
often achieve better visual results than those using conventional
image blending models \cite{nirkin2018face,nirkin2019fsgan}.

Recent years have also witnessed the development of several powerful
GANs for image generation \cite{brock2018large,karras2020analyzing,karras2017progressive,karras2019style}.
StyleGAN \cite{karras2020analyzing,karras2019style} is probably the
current state-of-the-art, which been shown to be capable of generating
diverse high-fidelity images at megapixel resolution. Its superiority
mainly comes from the original architectural design of its generator.
In conventional GANs \cite{goodfellow2014generative}, an input noise
vector $z\in\Z$ directly passes through the generator $G$ in a feed-forward
manner. Meanwhile, in StyleGAN, $z$ is mapped to an intermediate
latent code $w\in\W$ which is then used to modulate the intermediate
feature maps of $G$ at different spatial resolutions. Many works
have shown that the latent space $\W^{+}$ of StyleGAN - an extension
of $\W$ - is more expressive and disentangled than $\W$ and $\Z$,
motivating the use of $\W^{+}$ for latent space manipulation \cite{abdal2019image2stylegan,abdal2020image2stylegan++}.

Inspired by the great capability of StyleGAN, several works have made
use of a pretrained StyleGAN to perform high-resolution face swapping.
Zhu et al. \cite{zhu2021one} proposed to learn a Face Transfer Module
(FTM) that combines the inverted latent codes of the source and target
faces to obtain a swapped latent code which will then be sent to the
StyleGAN generator to produce a swapped face. However, since the two
latent codes contain entangled information of identity and attributes,
directly fusing them together may cause the swapped result to retain
some information of the source attributes and the target identity.
Xu et al. \cite{xu2022high}, on the other hand, proposed to concatenate
the ``structure'' part of the source latent code with the disjoint
``appearance'' part of the target latent code to get an input latent
code for the StyleGAN generator. Unfortunately, this input latent
code still contains entangled information in its parts. Thus, the
generated face derived from it has mixed styles, and is considered
as a side output not the final result.

To address the above limitation, we propose a novel face swapping
method based on a pretraind StyleGAN that explicitly disentangles
the intermediate latent space $\W^{+}$ of a pretrained StyleGAN into
two \emph{subspaces} $\W_{\id}^{+}$ and $\W_{\sty}^{+}$ characterizing
identity and style respectively. Motivated by the expressiveness and
linear editability of $\W^{+}$ \cite{abdal2019image2stylegan,tov2021designing},
we hypothesize that we can represent the inverted latent code $w\in\W^{+}$
of a face image as the sum of an \emph{``identity'' code} $w_{\id}\in\W_{\id}^{+}$
and a \emph{``style'' code} $w_{\sty}\in\W_{\sty}^{+}$, i.e., $w=w_{\id}+w_{\sty}$.
This enables us to regard face swapping as a simple\emph{ arithmetic
operation} in $\W^{+}$, that is the summation of a \emph{source ``identity''
code} and a \emph{target ``style'' code}. Therefore, we name our
method \emph{``Arithmetic Face Swapping''} (AFS). Compared to existing
formulations of face swapping, ours is more intuitive and elegant.
In fact, the standard face swapping is just \emph{a special case }of
numerous operations that our identity-style disentanglement could
support. For example, we can take the (weighted) average of multiple
target style codes in $\W_{\sty}^{+}$ and combine the result with
a source identity code in $\W_{\id}^{+}$ via summation, which is
equivalent to face swapping with one identity and multiple styles.
Similarly, we can easily combine multiple identities together to create
a new identity that has not existed before without changing styles.
These kinds of face swapping, to the best of our knowledge, have never
been considered in previous works. We implement our identity-style
disentanglement by learning a style extractor network $h$ that maps
$w$ to $w_{\sty}$ and compute $w_{\id}$ as $w-h(w)$. We also provide
a condition for $h$ which theoretically guarantees that our method
can preserve the identity of the source image even when this image
goes through a sequence of face swapping operations with different
styles. This condition is then turned into an objective for training
$h$, along with other well-designed objectives to ensure that $w_{\id}$
and $w_{\sty}$ derived from $h$ capture the true identity and style
information in $w$ respectively. Extensive experiments demonstrate
that AFS outperforms existing state-of-the-art face swapping methods
in generating high-quality results, and preserving the source identity
and the target attributes.

\section{Related Work\label{sec:Related-Work}}

\paragraph{Face Swapping}

Face swapping has long been an interesting and difficult problem in
computer vision \cite{blanz2004exchanging}. Early approaches are
mainly based on conventional image processing techniques, thus, require
the source and target faces to have similar poses and appearances
\cite{bitouk2008face,wang2008facial}. With the development deep neural
networks, especially deep generative models such as VAEs \cite{kingma2013auto}
and GANs \cite{goodfellow2014generative}, more advanced face swapping
methods have been proposed. Korshunova et al. \cite{korshunova2017fast}
regard face swapping as a style transfer problem in which content
is the target face, and style is the source identity. They leverage
a Texture Network \cite{ulyanov2016texture} to model the identity
of each subject in the training data. However, their method cannot
generalize to unseen subjects. Nirkin et al. \cite{nirkin2018face}
learn an occlusion-aware face segmentation model and combine it with
off-the-shelf 3D morphable face models (3DMM) \cite{paysan20093d,zhu2016face}
and image blending model \cite{perez2003poisson} to form a complete
pipeline for face swapping that can handle unconstrained face images.
Their subsequent work \cite{nirkin2019fsgan} further enhances this
pipeline by proposing a novel reenactment generator to better preserve
the target pose and expression, and a novel blending loss to better
preserve the target skin color. However, due to the limitation of
the Possion blender in simulating complex textures and lighting effects,
methods based on this technique \cite{nirkin2018face,nirkin2019fsgan,rossler2019faceforensics++}
tend to create unnatural swapped faces. Therefore, there has been
another line of works that make use of GANs to generate real looking
swapped faces. FSNet \cite{natsume2018fsnet} extracts the representation
of the source face region and the representation of the target face
landmarks, and sends them along with the target face-masked image
to an UNet-like generator to generate a swapped face. Since only the
target landmark information is given as input to the generator, this
model may not be able to preserve the expression and skin color of
the target face. IPGAN \cite{bao2018towards} generates a swapped
face from the concatenation of the source identity and the target
attribute vectors. This method can generalize to unseen identities
quite well but often fails to preserve details in the target face
such as wrinkles or beards. FaceShifter \cite{li2019faceshifter}
proposes a well-designed generator that combines the source identity
vector with the target attribute feature maps at various spatial resolutions.
This generator can generate good-looking a swapped face that faithfully
respects the style and resolution of the target face. The swapped
face then goes through a refinement network for facial occlusion correction.
SimSwap \cite{chen2020simswap} uses an ID Injection Module (IIM)
to integrate the source identity into the target features before sending
to the generator. This IIM contains multiple AdaIN layers \cite{huang2017arbitrary}
whose parameters depend on the source identity. HiFiFace \cite{wang2021hififace}
extends the source identity vector with the target 3D pose and expression
representation vectors extracted by a 3DMM model \cite{deng2019accurate},
and uses this extended source identity vector to modulate the generation
of a swapped face from the target face in nearly the same way as FaceShifter.
FaceInpainter \cite{li2021faceinpainter} is a two-stage face swapping
framework inspired by FaceShifter that also uses 3D pose and expression
as prior knowledge for face swapping. InfoSwap \cite{gao2021information}
leverages the Information Bottleneck principle \cite{alemi2016deep}
to achieve a good separation between identity and style information.

Besides the aforementioned methods, some methods make use of a pretrained
StyleGAN2 and are more related to ours. These methods usually map
the source and the target faces to the StyleGAN2 intermediate latent
codes, and use these latent codes to perform face swapping. MegaFS
\cite{zhu2021one} uses a Face Transfer Module (FTM) to fuse 14 high-level
components of the source and the target latent codes together. The
fused high-level components are then concatenated with the remaining
4 low-level components of the target latent codes to obtain a complete
swapped latent code which serves as the input to the StyleGAN2 generator
to generate a swapped face. HiRes \cite{xu2022high} uses both the
source and the target facial landmarks to modify the ``structure''
part of the source latent codes (consisting of the first 7 components),
and combines the modified one with the ``appearance'' part of the
target latent codes (consisting of the last 11 components) to obtain
a swapped latent code. This swapped latent code is then fed to the
StyleGAN2 generator not to generate a swapped face but to get the
intermediate feature maps. These intermediate feature maps will be
used to modulate another auto-encoder network that maps the target
face to the actual swapped face. 

\paragraph{GAN Inversion and GAN Latent Space Manipulation}

Due to large scopes of GAN Inversion and GAN Latent Space Manipulation,
in this paper, we only discuss works in these topics that are the
most related to ours. Given a well-trained GAN, we often want to find
the latent code that faithfully represents an input image. There are
three main approaches to the problem namely i) latent optimization
\cite{abdal2019image2stylegan,abdal2020image2stylegan++,creswell2018inverting,lipton2017precise},
ii) learning an encoder \cite{perarnau2016invertible,richardson2021encoding,tov2021designing},
and iii) the hybrid of learning an encoder and latent optimization
\cite{bau2019seeing,zhu2020domain}. The first approach directly optimizes
a random latent code by minimizing the reconstruction error between
the generated image w.r.t. the latent code and the input image. This
approach often yields accurate inversion but does not scale well since
we have to run multiple optimization steps for each input image. By
contrast, the second approach computes the latent code via a single
forward pass through the encoder, thus, is significantly faster at
inference time. The third approach learns an encoder to find good
initializations of latent codes, then optimizes these latent initializations.
This approach often enjoys the benefits of fast inference and good
accuracy from the first two approaches.

Many recent works have shown that semantically meaningful image editions
can be done via simple affine transformations in the latent space
of a pretrained GAN. Some works try to discover interpretable directions
in the GAN latent space in an unsupervised manner \cite{harkonen2020ganspace,shen2021closed,voynov2020unsupervised}.
However, they have to manually assign semantics to the discovered
directions. Supervised methods \cite{shen2020interfacegan,tewari2020pie,tewari2020stylerig,yang2021l2m},
by contrast, make use of pretrained attribute classifiers/regressors
to guide the search for semantic directions.

\section{Method\label{sec:Method}}

Since our method makes use of a pretrained StyleGAN2 \cite{karras2020analyzing}
for face swapping, below we give a brief description of StyleGAN \cite{karras2019style}
and StyleGAN2 \cite{karras2020analyzing} before going into details
of our method.

\subsection{StyleGAN and StyleGAN2}

StyleGAN \cite{karras2019style} is a recent state-of-the-art variant
of GANs \cite{goodfellow2014generative} that is able to generate
high-fidelity images at high resolution. Its generator $G$ inherits
many advantageous designs from the style transfer literature \cite{huang2017arbitrary}.
Different from conventional GANs, in StyleGAN, the noise vector $z\in\Z$
is not fed directly to $G$ but is mapped to an \emph{intermediate}
latent code $w\in\W\subset\Real^{512}$ via an 8-layer MLP $f:\Z\rightarrow\W$.
This latent code $w$ controls the style and other details of a generated
image from coarse to fine levels by serving as input to 18 different
AdaIN layers \cite{huang2017arbitrary,dumoulin2016learned,ghiasi2017exploring}
of the generator at various spatial resolutions from 4$\times$4 to
1024$\times$1024 (2 AdaIN layers for each spatial resolution). Several
works \cite{shen2020interfacegan,harkonen2020ganspace} empirically
showed that $\W$ supports linear latent code manipulation as they
were able to find semantic directions in $\W$ corresponding to meaningful
disentangled attributes such as color change, zoom, pose, gender,
etc. However, $\W$ is still not expressive enough to support faithful
reconstruction of existing images \cite{abdal2019image2stylegan,abdal2020image2stylegan++,richardson2021encoding}.
Therefore, later works mostly consider the \emph{extended} latent
space $\W^{+}$ rather than $\W$ for image editing and face swapping
tasks \cite{tewari2020pie,tov2021designing,xu2022high,zhu2021one}.
An \emph{extended} latent code $w\in\W^{+}$ is a concatenation of
18 different 512-dimensional $w_{i}\in\W$ vectors, one per AdaIN
layer. Clearly, $w\in\W^{+}$ can capture richer style information
than $w'\in\W$.

StyleGAN2 \cite{karras2020analyzing} is an improved version of StyleGAN.
It makes several important changes in the generator architecture and
the training strategy of StyleGAN to account for unpleasant artifacts
in images generated by StyleGAN. Specifically, in StyleGAN2, each
AdaIN layer is replaced with a weight demodulation, and an additional
path length regularization is enforced during training. These modifications
make the StyleGAN2's generator smoother, easier to invert, and able
to generate more real images than the StyleGAN's counterpart while
still retaining full controllability of style via the extended latent
space $\W^{+}$. Therefore, unless stated otherwise, we will consider
StyleGAN2 rather than StyleGAN for the rest of our paper.

\begin{figure*}
\begin{centering}
\includegraphics[width=0.9\textwidth]{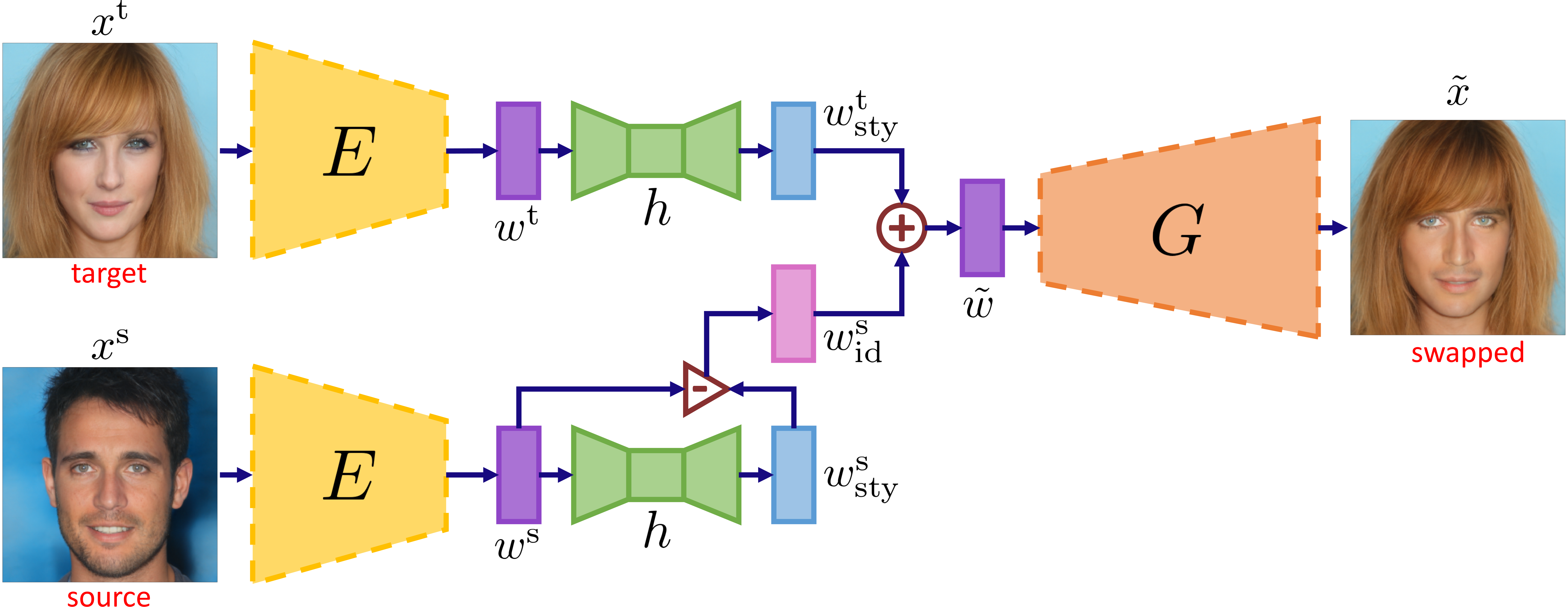}
\par\end{centering}
\caption{Overall framework of our proposed method. $G$ is a StyleGAN2 generator,
$E$ is an encoder, $h$ is a style extractor network. Both $G$ and
$E$ were pretrained, thus, are marked with dashed border.\label{fig:overall_framework}}
\end{figure*}

\subsection{Face Swapping with a Pretrained StyleGAN2\label{subsec:Face-Swapping-with-a-Pretrained-StyleGAN2}}

In the standard face swapping task, we want to transfer the identity
in a source face image $x^{\s}$ to a target face image $x^{\t}$
while keeping other facial attributes (styles) in $x^{\t}$ unchanged.
We use a well pretrained encoder $E$ \cite{tov2021designing} to
embed $x^{\s}$ and $x^{\t}$ into the latent space $\W^{+}$ of a
StyleGAN2 pretrained on face images, and derive two latent codes $w^{\s}$
and $w^{\t}$ $\in\W^{+}$ respectively. 

Motivated by the expressiveness and linear editability of $\W^{+}$,
we argue that we can disentangle $\W^{+}$ into two subspaces $\W_{\id}^{+}$
and $\W_{\sty}^{+}$ characterizing identity and style respectively,
so that a latent code $w\in\W^{+}$ can be represented as follows:
\begin{equation}
w=w_{\id}+w_{\sty}\label{eq:id_style_decomp}
\end{equation}
where $w_{\id}\in\W_{\id}^{+}$ and $w_{\sty}\in\W_{\sty}^{+}$ denote
the \emph{``identity''} and \emph{``style''} latent codes that
capture the identity and style information in $w$. Similarly, we
also have $w^{\s}=w_{\id}^{\s}+w_{\sty}^{\s}$ and $w^{\t}=w_{\id}^{\t}+w_{\sty}^{\t}$.

The above arithmetic decomposition of $w^{\s}$ and $w^{\t}$ suggests
an efficient and effective way to do face swapping which is adding
the source identity code $w_{\id}^{\s}$ and the target style code
$w_{\sty}^{\t}$ together:
\begin{align}
\tilde{w} & =w_{\id}^{\s}+w_{\sty}^{\t}\label{eq:face_swap_1}\\
 & =w^{\s}-w_{\sty}^{\s}+w_{\sty}^{\t}\\
 & =w_{\id}^{\s}+w^{\t}-w_{\id}^{\t}
\end{align}
Here, $\tilde{w}$ denotes the \emph{``swapped''} latent code. We
generate a swapped face image $\tilde{x}$ from $\tilde{w}$ by feeding
$\tilde{w}$ to the StyleGAN2 generator $G$, i.e. $\tilde{x}=G(\tilde{w})$.

Intuitively, we can generalize the face swapping in Eq.~\ref{eq:face_swap_1}
to account for multiple sources and targets as follows:
\begin{align}
\tilde{w} & =\left(\sum_{i=1}^{M}\alpha_{i}w_{\id}^{\s_{i}}\right)+\left(\sum_{j=1}^{N}\beta_{j}w_{\sty}^{\t_{j}}\right)\label{eq:face_swap_multi_1}\\
 & =\sum_{i=1}^{M}\sum_{j=1}^{N}\alpha_{i}\beta_{j}\left(w_{\id}^{\s_{i}}+w_{\sty}^{\t_{j}}\right)\\
 & =\sum_{i=1}^{M}\sum_{j=1}^{N}\alpha_{i}\beta_{j}\tilde{w}^{\s_{i},\t_{j}}\label{eq:face_swap_multi_3}
\end{align}
where $M,N\geq1$ and $\alpha_{1},...,\alpha_{M}\geq0$; $\sum_{i=1}^{M}\alpha_{i}=1$
and $\beta_{1},...,\beta_{N}\geq0$; $\sum_{j=1}^{N}\beta_{j}=1$.
The final swapped code $\tilde{w}$ in Eq.~\ref{eq:face_swap_multi_3}
is equivalent to the linear interpolation of individual swapped codes
derived from all source-target pairs.

The remaining question is how to model $w_{\id}$ and $w_{\sty}$.
There could be various approaches to this but in our paper, we propose
to learn a \emph{style extractor network} $h$ that maps $w$ to $w_{\sty}$,
i.e., $w_{\sty}=h(w)$, and compute $w_{\id}$ as $w_{\id}=w-h(w)$.
We found this solution works well in our experiments so we leave the
exploration of other approaches for future work. We can adopt $h$
into the formula of $\tilde{w}$ as follows:
\begin{equation}
\tilde{w}=w^{\s}-h(w^{\s})+h(w^{\t})\label{eq:face_swap_with_h}
\end{equation}

We refer to our proposed method as \emph{``Arithmetic Face Swapping''}
(AFS), and provide an illustration of it in Fig.~\ref{fig:overall_framework}.
Below, we provide a condition for $h$ which ensures that $\tilde{w}$
preserves the identity in $w^{\s}$.
\begin{prop}
$\tilde{w}_{\id}=w_{\id}^{\s}$ if and only if $h(\tilde{w})=h(w^{\t})$.\label{prop-1}
\end{prop}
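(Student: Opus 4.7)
The plan is to prove Proposition 1 by a direct algebraic manipulation using the definitions of the identity and style decomposition. Since $h$ is the style extractor, by construction we have $w_{\sty} = h(w)$ and $w_{\id} = w - h(w)$ for any latent code $w \in \W^{+}$, and in particular $w_{\id}^{\s} = w^{\s} - h(w^{\s})$ and $\tilde{w}_{\id} = \tilde{w} - h(\tilde{w})$.

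First I would substitute the expression for $\tilde{w}$ from Eq.~\ref{eq:face_swap_with_h}, namely $\tilde{w} = w^{\s} - h(w^{\s}) + h(w^{\t})$, into the formula $\tilde{w}_{\id} = \tilde{w} - h(\tilde{w})$. This gives
\begin{equation*}
\tilde{w}_{\id} = w^{\s} - h(w^{\s}) + h(w^{\t}) - h(\tilde{w}) = w_{\id}^{\s} + \bigl(h(w^{\t}) - h(\tilde{w})\bigr).
\end{equation*}

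Next I would simply read off the equivalence: $\tilde{w}_{\id} = w_{\id}^{\s}$ holds if and only if the residual term $h(w^{\t}) - h(\tilde{w})$ vanishes, which is exactly the condition $h(\tilde{w}) = h(w^{\t})$. Both the forward and backward implications are immediate from this single line of algebra, so no case analysis is needed.

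Honestly, there is no meaningful obstacle in this proof; the content of the proposition is essentially a restatement of the definition of $\tilde{w}$ combined with the definition of $w_{\id}$ via $h$. The real value of the statement is conceptual rather than technical: it tells us that training $h$ so that it is \emph{invariant under the swap operation}, i.e. $h(\tilde{w}) = h(w^{\t})$, is exactly what is needed to preserve the source identity code through face swapping (and, by induction, through a sequence of such swaps). I would therefore keep the proof to two or three lines and emphasize in the surrounding text that this invariance condition is what motivates one of the training objectives for $h$ in the subsequent section.
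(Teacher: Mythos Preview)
Your proposal is correct and essentially identical to the paper's own proof: the paper also writes $\tilde{w}_{\id}=\tilde{w}-h(\tilde{w})=w_{\id}^{\s}+h(w^{\t})-h(\tilde{w})$ and reads off the equivalence in one line. Your added commentary about the conceptual role of the invariance condition accurately anticipates how the paper uses this result to motivate the consistency loss $\Loss_{\text{cons}}$.
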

\begin{proof}
Since $\tilde{w}_{\id}=\tilde{w}-h(\tilde{w})=w_{\id}^{\s}+h(w^{\t})-h(\tilde{w})$,
we have $\tilde{w}_{\id}=w_{\id}^{\s}$ if and only if $h(\tilde{w})=h(w^{\t})$.
\end{proof}
\begin{cor}
[Chains of face swapping operations]If the style extractor $h$ can
preserve style for \textbf{any} face swapping operation, i.e., $h(\tilde{w})=h(w^{\t})\ \forall w^{\s},w^{\t}$;
then we can perform a chain of face swapping operations without losing
identity of the source image.\label{cor-1}
\end{cor}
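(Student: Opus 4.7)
The plan is to prove the corollary by induction on the length of the chain of face swapping operations, using Proposition 1 as the engine at each step. Let $w^{\s}$ be the original source and let $w^{\t_1}, w^{\t_2}, \ldots, w^{\t_k}$ be the successive targets of the chain. Define the intermediate swapped codes recursively by $\tilde{w}^{(0)} = w^{\s}$ and $\tilde{w}^{(i)} = \tilde{w}^{(i-1)} - h(\tilde{w}^{(i-1)}) + h(w^{\t_i})$ for $i = 1, \ldots, k$, so that at stage $i$ the current latent code $\tilde{w}^{(i-1)}$ plays the role of the source and $w^{\t_i}$ the role of the target in the basic swap formula of Eq.~\ref{eq:face_swap_with_h}. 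The goal is to show $\tilde{w}^{(k)}_{\id} = w^{\s}_{\id}$.

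The base case $i = 1$ is a direct application of Proposition 1: the hypothesis of the corollary guarantees $h(\tilde{w}^{(1)}) = h(w^{\t_1})$, so Proposition 1 gives $\tilde{w}^{(1)}_{\id} = w^{\s}_{\id}$. For the inductive step, assume $\tilde{w}^{(i-1)}_{\id} = w^{\s}_{\id}$. Apply Proposition 1 to the $i$-th swap, which takes $\tilde{w}^{(i-1)}$ as source and $w^{\t_i}$ as target. The universal hypothesis again ensures $h(\tilde{w}^{(i)}) = h(w^{\t_i})$, so Proposition 1 yields $\tilde{w}^{(i)}_{\id} = \tilde{w}^{(i-1)}_{\id}$, which equals $w^{\s}_{\id}$ by the induction hypothesis. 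This completes the induction.

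There is no real obstacle; the only point requiring attention is to verify that the hypothesis ``$h(\tilde{w}) = h(w^{\t})$ for any $w^{\s}, w^{\t}$'' is genuinely universal over the source argument, so that it may be re-invoked at each step with the intermediate code $\tilde{w}^{(i-1)}$ substituted for $w^{\s}$. Once this is observed, the corollary reduces to iterated application of Proposition 1, and the identity code of the source is transported unchanged through the entire chain.
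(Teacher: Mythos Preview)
Your proof is correct and follows essentially the same approach as the paper: the paper also states that the result is a direct extension of Proposition~\ref{prop-1} proven by induction, and then merely illustrates the inductive step with a three-code example rather than writing out the full argument as you do. Your explicit identification of the point that the universal hypothesis must quantify over arbitrary sources (so that intermediate codes $\tilde{w}^{(i-1)}$ may be substituted) is a welcome clarification the paper leaves implicit.
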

\begin{proof}
This is just a direct extension of Prop.~\ref{prop-1} to chains
of face swapping operations and can be easily proven by induction.
Below, we give an example of how it will work. 

Assuming that we have performed the following chain of face swapping
operations for three latent codes $w^{(1)}$, $w^{(2)}$, $w^{(3)}$
based on the decomposition in Eq.~\ref{eq:id_style_decomp}:
\begin{align*}
\tilde{w}^{(1,2)} & =w_{\id}^{(1)}+w_{\sty}^{(2)}\\
\tilde{w}^{(1,3)} & =\tilde{w}_{\id}^{(1,2)}+w_{\sty}^{(3)}
\end{align*}
According to Prop.~\ref{prop-1}, if $h\left(\tilde{w}^{(1,2)}\right)=h\left(w^{(2)}\right)$,
$\tilde{w}_{\id}^{(1,2)}$ will equal $w_{\id}^{(1)}$; and $\tilde{w}^{(1,3)}$
can be written as:
\[
\tilde{w}^{(1,3)}=w_{\id}^{(1)}+w_{\sty}^{(3)}
\]
which means $\tilde{w}^{(1,3)}$ is only composed of the identity
of $w^{(1)}$ and the style of $w^{(3)}$ with \emph{no} style of
$w^{(2)}$. This allows $h\left(\tilde{w}^{(1,3)}\right)$ to be exactly
similar to $h\left(w^{(3)}\right)$ and if this happens, $\tilde{w}_{\id}^{(1,3)}$
will equal $w_{\id}^{(1)}$. The same reasoning is applied when we
perform another face swapping operation between $\tilde{w}^{(1,3)}$
and $w^{(4)}$.
\end{proof}
Given the elegance of $\Model$, one may wonder whether there exits
a setting for $w_{\id}$, $w_{\sty}$, and $h$ so that $\tilde{w}$
computed in Eq.~\ref{eq:face_swap_with_h} can faithfully preserve
the source identity and the target style for any source-target pair
$(w^{\s},w^{\t})$. In the remark below, we show that such setting
exists in theory.
\begin{rem}
[Possibility of $\Model$]If we can design $w\in\Real^{d}$ so that
its first $k$ components only capture identity and its last $d-k$
components only capture style, and if $h$ is a function that extracts
the last $d-k$ components of a vector, then $\tilde{w}=w^{\s}-h(w^{\s})+h(w^{\t})$
can faithfully preserve the source identity and the target style.\label{remark-1}
\end{rem}
It is not hard to verify the correctness of Remark~\ref{remark-1}.
Intuitively, it means we can express $w_{\id}$ as $(w_{1},...,w_{k},0,...,0)$
and $w_{\sty}$ as $(0,...,0,w_{k+1},...,w_{d})$.

Next, we will describe the objectives for training $h$ so that $h$
yields correct identity-style disentanglement and satisfies the style
preservation condition in Prop.~\ref{prop-1}.

\subsection{Objective Functions}

\paragraph{ID loss}

As $\tilde{w}$ contains the identity part of $w^{\s}$, we want its
generated image $G(\tilde{w})$ will preserve the identity in the
source image $x^{\s}$. We define the \emph{identity similarity score}
between $G(\tilde{w})$ and $x^{\s}$ as the cosine similarity between
the representations computed by a pretrained ArcFace \cite{deng2019arcface}
model $R$ for $G(\tilde{w})$ and $x^{\s}$, and minimize the loss
below:
\begin{equation}
\Loss_{\text{ID}}=1-\cos\left(R\left(G\left(\tilde{w}\right)\right),R\left(x^{\s}\right)\right)\label{eq: loss-id}
\end{equation}
where $\cos(\cdot,\cdot)$ denotes the cosine similarity.

\begin{table}
\begin{centering}
\resizebox{.98\columnwidth}{!}{%
\begin{tabular}{ccccc}
\hline 
\multirow{1}{*}{Method} & \multicolumn{1}{c}{ID ($\uparrow$)} & \multirow{1}{*}{Expr.($\downarrow$)} & \multirow{1}{*}{Pose($\downarrow$)} & FID ($\downarrow$)\tabularnewline
\hline 
\hline 
FSGAN \cite{nirkin2019fsgan}  & 0.20 & 6.80 & \textbf{4.31} & 67.00\tabularnewline
FaceShifter \cite{li2019faceshifter} & 0.48 & 7.15 & 5.52 & 12.16\tabularnewline
MegaFS \cite{zhu2021one} & 0.46 & 8.67 & 6.98 & 11.23\tabularnewline
$\Model$ (Ours) & \textbf{0.49} & \textbf{5.01} & 4.54 & \textbf{4.56}\tabularnewline
\hline 
\end{tabular}}
\par\end{centering}
\caption{Quantitative results of our face swapping method and baselines on
the CelebA-HQ test set.\label{tab:quantitative-results}}
\end{table}

\paragraph{Feature map loss}

Intermediate feature maps of a StyleGAN2 generator can be roughly
divided into three groups with low (4$\times$4, 8$\times$8), middle
(16$\times$16, 32$\times$32) and high (64$\times$64 $\rightarrow$
1024$\times$1024) spatial resolutions. The first two groups have
been shown to represent coarse facial attributes such as pose, face
shape, and general hair style that have little effect on identity
\cite{karras2019style}. In addition, since each feature map $i$
can be modified separately via each component $w_{i}$ of a latent
code $w\in\W^{+}$, we can force $\tilde{w}$ to create the same style
as $w^{\t}$ at low-resolution feature maps, leaving the identity
information from $w^{\s}$ to be represented by high-resolution feature
maps. This can be done by minimizing the loss below:
\begin{equation}
\Loss_{\text{feat}}=\left\Vert G_{\text{f}}\left(\tilde{w}\right)-G_{\text{f}}\left(w^{\t}\right)\right\Vert _{2}\label{eq:feat_loss}
\end{equation}
where $G_{\text{f}}(\cdot)$ denotes the (second) 32$\times$32 feature
map output of the StyleGAN2 generator.

\paragraph{Perceptual loss}

In addition to $\Loss_{\text{feat}}$, we also make use of the Learned
Perceptual Image Patch Similarity (LPIPS) loss \cite{zhang2018perceptual}
to enforce the perceptual similarity between $G(\tilde{w})$ and $x^{\t}$.
LPIPS has been shown to account for many nuances of human perception
better than other perceptual losses (e.g., SSIM \cite{wang2004image},
FSIM \cite{zhang2011fsim}). The formula of our LPIPS loss is given
as follows:
\begin{equation}
\Loss_{\text{LPIPS}}=\left\Vert \text{VGG}\left(G\left(\tilde{w}\right)\right)-\text{VGG}\left(G\left(w^{\t}\right)\right)\right\Vert _{2}\label{eq:perceptual_loss}
\end{equation}
where VGG$(\cdot)$ denotes the feature vector extracted from a pretrained
VGG \cite{simonyan2014very}.

\paragraph{Consistency loss}

To ensure $h(\tilde{w})$ equals $h(w^{\t})$ in Prop.~\ref{prop-1},
we simply minimize the L1 distance between $h(\tilde{w})$ and $h(w^{\t})$
as follows:
\begin{equation}
\Loss_{\text{cons}}=\left\Vert h\left(\tilde{w}\right)-h\left(w^{\t}\right)\right\Vert _{1}\label{eq:consistency_loss}
\end{equation}

\paragraph{Overall loss}

The overall loss function for training $h$ is:
\begin{equation}
\mathcal{L}=\lambda_{1}\mathcal{L}_{\text{ID}}+\lambda_{2}\mathcal{L}_{\text{feat}}+\lambda_{3}\mathcal{L}_{\text{LPIPS}}+\lambda_{4}\Loss_{\text{cons}}\label{eq:final_loss}
\end{equation}
where $\lambda_{1}$, $\lambda_{2}$, $\lambda_{3}$, $\lambda_{4}$
$\geq$ 0 are coefficients. We note that we keep VGG and ArcFace intact
and only train $h$.

\subsection{Architecture of the Style Extractor Network}

\begin{figure}
\begin{centering}
\includegraphics[width=1\columnwidth]{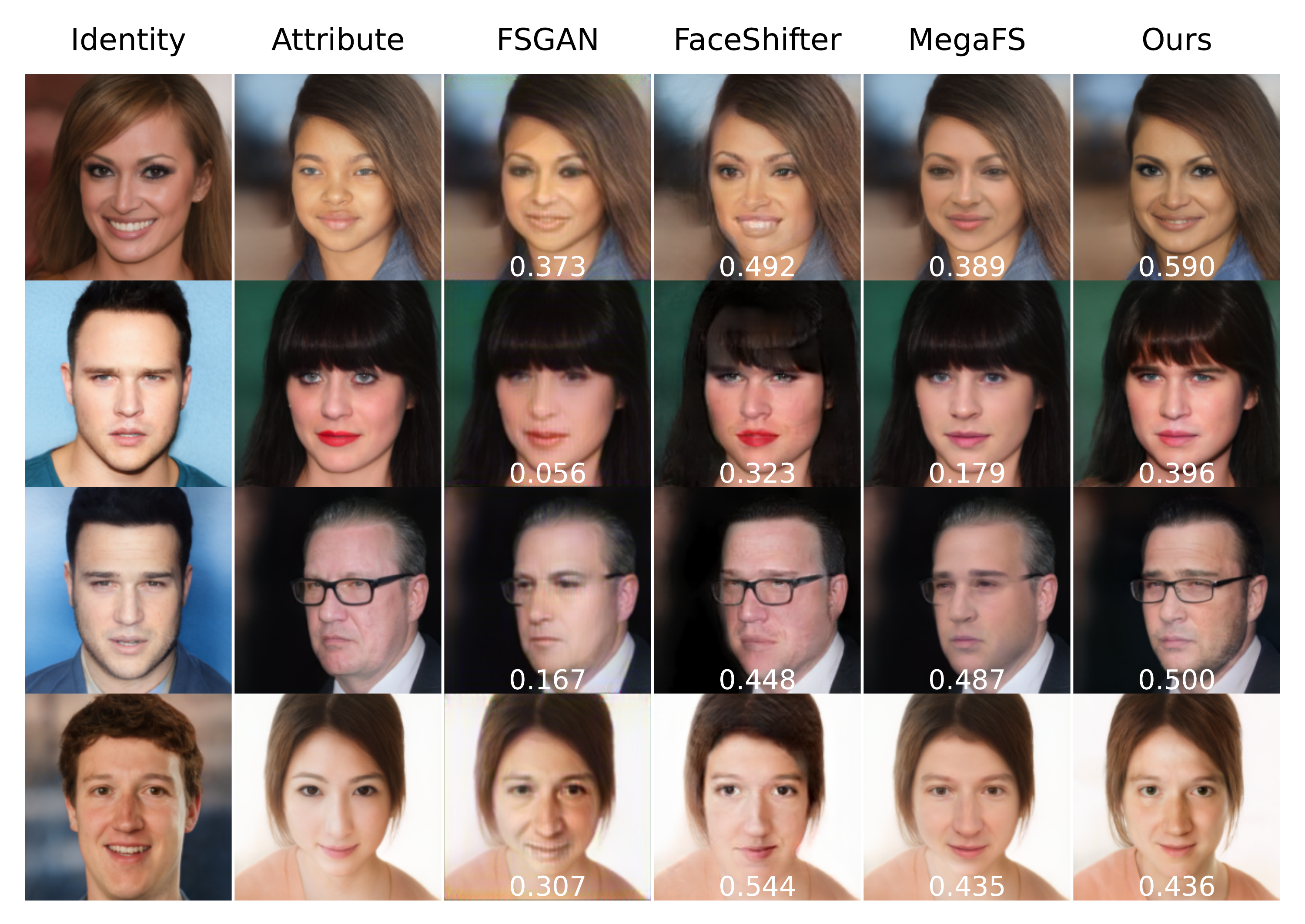}
\par\end{centering}
\caption{Visualization of swapped faces produced by our method and baselines.
The first two columns contain source faces (``identity'') and target
faces (``attribute'') from the CelebA-HQ test set. The identity
similarity score is shown at the bottom of each swapped face image.\label{fig:qualitative-comparision.}}
\end{figure}

Since a latent code $w\in\W^{+}$ has many dimensions (18$\times$512$=$9216
in total), if we feed the whole latent code $w$ to the style extractor
$h$, $h$ will be very big and difficult to learn. From the observation
that the components $w_{1},...,w_{18}$ of $w$ can be independent
of each other and usually capture different styles typical of the
feature maps they control, we instead use 18 different style extractor
sub-networks $h_{1},...,h_{18}$ for 18 different components of $w$.
For each sub-network $h_{i}$, we have the input vector of size 512
which will be transformed to a hidden vector of size 256 via a fully
connected (FC) layer. Then, this hidden vector will be fed to a series
of highway layers \cite{srivastava2015training} before being mapped
back to the size 512. By default, the number of highway layers is
2. We empirically found that this architecture facilitates the learning
of $h_{i}$. We tried to replace the highway layers with standard
FC layers but achieved worse results. We hypothesize that may be the
gating mechanism in highway layers facilitate selecting which elements
of $w_{i}$ should be discarded and retained to get the correct style
code component.

\section{Experiments\label{sec:Experiments}}

\begin{figure}
\begin{centering}
\includegraphics[width=1\columnwidth]{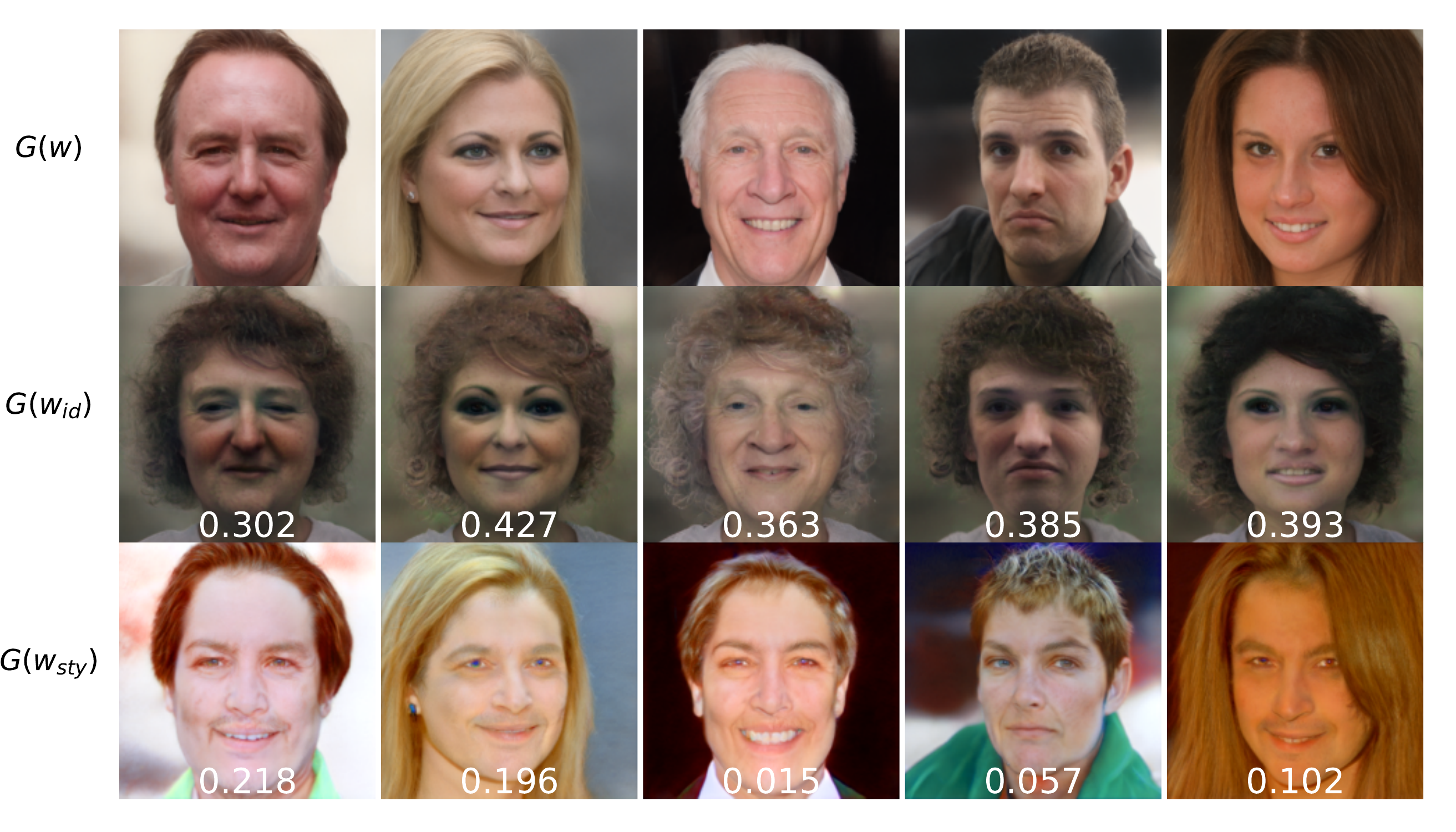}
\par\end{centering}
\caption{Visualization of the \emph{original}, \emph{``identity''}, and \emph{``style''
images} ($G(w)$, $G(w_{\protect\id})$, $G(w_{\protect\sty})$) generated
from the \emph{original}, \emph{``identity''} and \emph{``style''
latent codes} ($w$, $w_{\protect\id}$, $w_{\protect\sty}$).\label{fig:plot-w-id}}
\end{figure}

\subsection{Experimental Setup}

\subsubsection{Datasets and training settings}

We used the CelebA-HQ dataset \cite{karras2017progressive} for our
experiments. This dataset contains 30k celebrities' faces at megapixel
resolutions (1024$\times$1024) and is challenging for face swapping.
We used a pretrained e4e encoder \cite{tov2021designing} to invert
27176 images in the training and validation sets of CelebA-HQ to latent
codes, and trained our style extractor $h$ on these latent codes.
We trained $h$ for 10 epochs with a batch size of 4. We used an Adam
optimizer with an initial learning rate of 1e-4, decayed to 1e-6 following
a cosine schedule over all steps. We set the loss coefficients (Eq.~\ref{eq:final_loss})
as follows: $\lambda_{1}$=1, $\lambda_{2}$=3.5, $\lambda_{3}$=1,
and $\lambda_{4}$=0.1.

\subsubsection{Metrics}

For identity preservation evaluation, we use the \emph{identity similarity
score} which is the cosine similarity produced by a pretrained ArcFace
\cite{deng2019arcface} for $\tilde{x}$ and $x^{\s}$. We follow
previous works \cite{li2019faceshifter,zhu2021one} and consider \emph{pose}
and \emph{expression} for attribute preservation evaluation. The \emph{expression
error} is the L2 distance between 2D landmarks of $\tilde{x}$ and
$x^{\t}$ which are extracted by the Face Alignment library\footnote{\url{https://github.com/1adrianb/face-alignment}}.
The \emph{pose error} is the L2 distance between the outputs of Hopenet
\cite{ruiz2018fine} w.r.t. $\tilde{x}$ and $x^{\t}$. We computed
the identity similarity score, expression error, pose error on 5000
source-target pairs sampled randomly from the test set of CelebA-HQ.
To measure the visual quality of swapped faces, we use the Frechet
Inception Distance (FID) \cite{heusel2017gans}. We formed 15k source-target
pairs from 30k images of CelebA-HQ and generated 15k swapped faces
from these pairs. We computed the FID between the swapped faces images
and the original images of CelebA-HQ.

\subsection{Main Results}

\begin{figure*}
\begin{centering}
\includegraphics[width=0.8\textwidth]{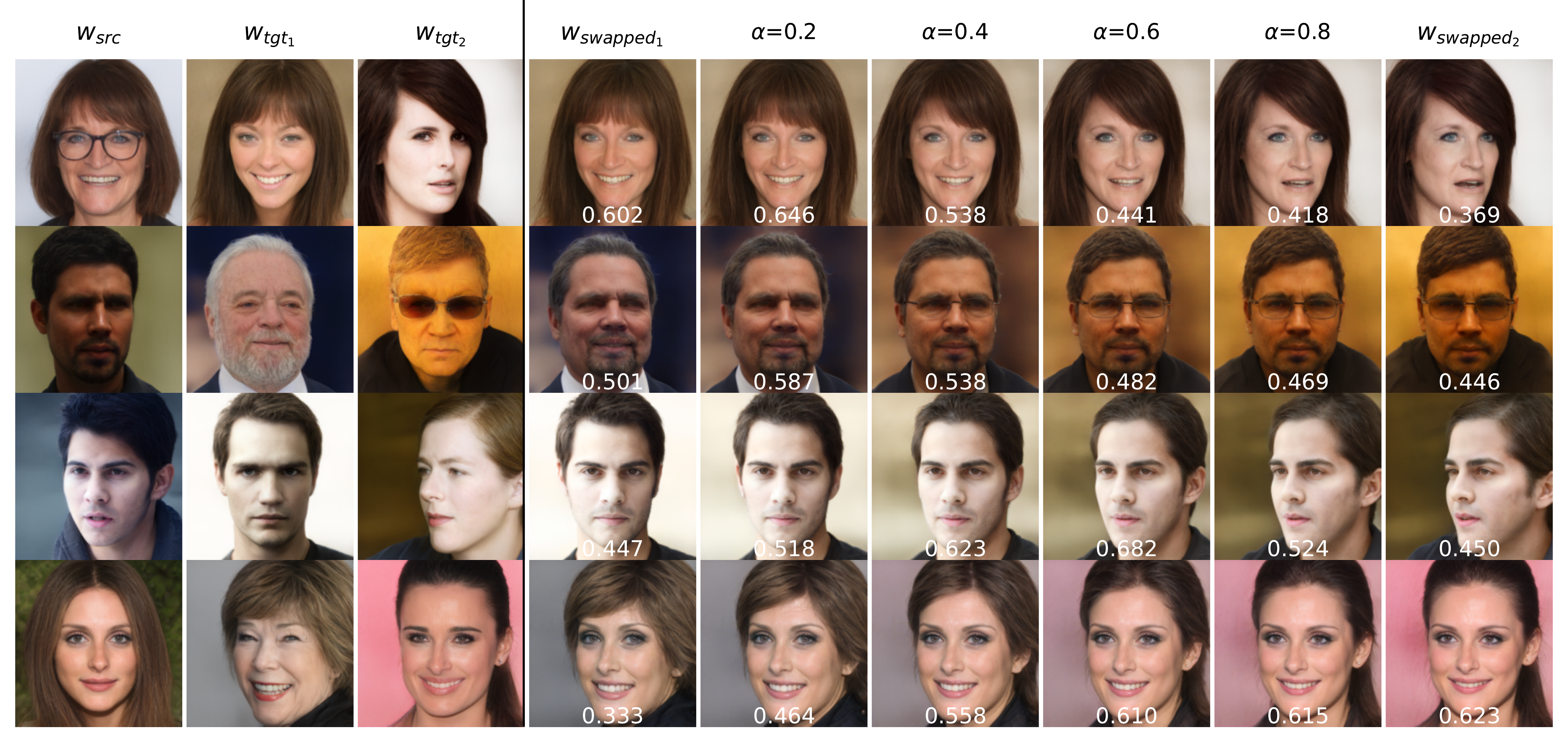}
\par\end{centering}
\caption{Results of combining an identity from one source ($w^{\protect\s}$)
with styles from two targets ($w^{\protect\t_{1}}$, $w^{\protect\t_{2}}$).
The formula of the final swapped latent code $\tilde{w}$ is $\tilde{w}=w_{\protect\id}^{\protect\s}+\left(\alpha w_{\protect\sty}^{\protect\t_{1}}+(1-\alpha)w_{\protect\sty}^{\protect\t_{2}}\right)$
with $0\protect\leq\alpha\protect\leq1$.\label{fig:one_id_two_stys}}
\end{figure*}

\subsubsection{Quantitative comparison}

As shown in Table~\ref{tab:quantitative-results}, $\Model$ outperforms
all face swapping baselines in all metrics except for the case in
which ours achieves a higher pose error than FSGAN. This means the
swapped faces generated by our method not only have better visual
quality but also preserve the source identity and target attributes
better than those generated by the baselines, which justifies the
effectiveness of our arithmetic identity-style disentanglement of
the latent space $\W^{+}$ for face swapping. For the case that $\Model$
has higher pose error than FSGAN, we think the possible reason is
that FSGAN explicitly models the target pose (via its 3DMM) and uses
this information for face swapping while our method does not. However,
since FSGAN does not use GAN to generate swapped faces, it achieves
a very poor FID score compared to ours and other GAN-based baselines.

\subsubsection{Visualization of swapped faces}

As shown in Fig.~\ref{fig:qualitative-comparision.}, the swapped
faces generated by $\Model$ look more natural and detailed than those
generated by the baselines, which reflects the low FID score of our
method in Table~\ref{tab:quantitative-results}. Since FSGAN and
MegaFS apply the target face mask to the swapped result to keep the
hair and background of the target face image intact, it is not surprising
that their swapped faces look exactly similar to the target face in
all details except for the face region specified by the face mask.
However, simply using face masks as a postprocessing step can lead
to some unwanted artifacts, e.g., when the target subject wears eyeglasses
(row 3). Our method, despite not using face masks, still preserves
the target attributes very well. FaceShifter also does this well since
it only adjusts certain areas in the target feature maps specified
by the source identity embedding vector in a selective manner. However,
FaceShifter often produces artifacts when the source and target faces
have largely different shapes (row 1 and row 4) while our method does
not have such problem.

\subsubsection{Visualization of identity and style latent codes}

One interesting capability of $\Model$ that may not be found in other
face swapping methods is the visualization of identity and style codes.
Fig.~\ref{fig:plot-w-id} shows some examples of \emph{``identity''}
and \emph{``style'' images} ($G(w_{\id})$, $G(w_{\sty})$) generated
from the corresponding identity and style codes ($w_{\id}$, $w_{\sty}$),
respectively. All these images are interpretable faces, which empirically
verifies that $w_{\id}$ and $w_{\sty}$ are in the subspaces of $\W^{+}$
rather than some arbitrary spaces. The ``identity'' images have
almost the same general styles (e.g., straight-looking, brown curly
hair, pale skin) and only differ in some identity-specific facial
attributes (e.g., eye shape, nose shape, cheek type) which, we think,
are considered useful by the face recognition model $R$. By contrast,
the ``style'' images seem to have similar identities, and are mainly
different in identity-unrelated facial attributes (e.g., hair style
and color, skin color, shirt color). These results indicate that our
method properly disentangles a latent code into identity and style
parts, and is, to some extent, more interpretable than existing methods.

\subsubsection{Face swapping with one identity and two styles}

In Fig.~\ref{fig:one_id_two_stys}, we show face swapping results
with an identity from one source and styles from two targets. The
contribution of each target to the final result is controlled via
a hyperparameter $\alpha$ ($0\leq\alpha\leq1$). Clearly, the style
of the final swapped face is a linear interpolation of the styles
of two individual swapped faces, which empirically justifies our formula
in Eq.~\ref{eq:face_swap_multi_3}. We also note that there is a
slight difference in the identity similarity scores of the two individual
swapped faces. It implies that the target style has some effect on
the output of the face recognition model $R$.

\begin{figure*}
\begin{centering}
\includegraphics[width=0.75\textwidth]{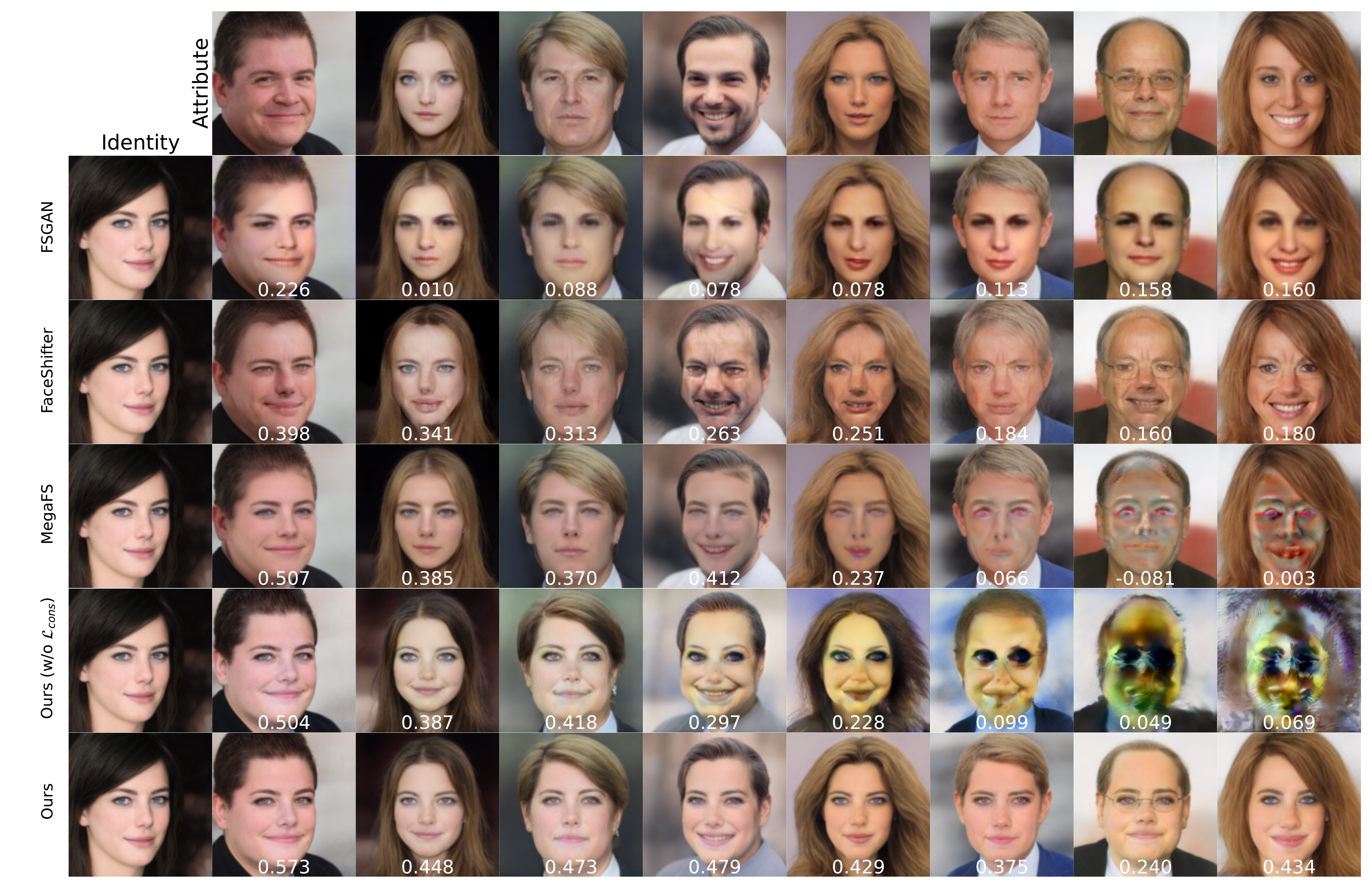}
\par\end{centering}
\caption{Results of running a chain of face swapping operations. The leftmost
column shows a single source face, and the topmost row shows multiple
target faces that will be used sequentially for face swapping. Each
of the following rows shows a sequence of swapped faces generated
by each method. Each swapped face in the sequence is derived from
the face \emph{immediately to the left} as source and the face \emph{at
the topmost row} as target. The identity similarity score between
a swapped face and the original source face (leftmost) is shown at
the bottom of the swapped face image.\label{fig:chain-of-decompositions.}}
\end{figure*}

\subsubsection{Chains of face swapping operations}

In the Method section, we theoretically showed that our method is
able to perform chains of face swapping operations if $h$ satisfies
the condition in Cor.~\ref{cor-1} which is characterized by the
loss $\Loss_{\text{cons}}$ (Eq.~\ref{eq:consistency_loss}). In
this experiment, we empirically verify this capability. From Fig.~\ref{fig:chain-of-decompositions.},
it is clear that adding $\Loss_{\text{cons}}$ to the final loss greatly
improves the performance of our method, allowing our method to generate
high-fidelity swapped faces that well preserve the source identity
and target attributes even after many successive steps of face swapping.
Meanwhile, FaceShifter and MegaFS start producing unnatural swapped
faces after just two and four steps, respectively.

\subsubsection{Other results}

Due to space limit, we refer readers to the Appendix for results on
``face swapping with two identities and one style'', and on ``semantically
meaningful linear transformations in the identity and style spaces''.

\section{Conclusion\label{sec:Conclusion}}

We have proposed a novel high-fidelity face swapping method called
``Arithmetic Face Swapping'' (AFS) that disentangles the intermediate
latent space $\W^{+}$ of a pretrained StyleGAN2 into two subspaces
w.r.t. the identity and style, and treats face swapping as an arithmetic
operation in this space, i.e., the summation of a source identity
code and a target style code. Our method can easily generalize over
the standard face swapping to support multiple source identities and
target styles. Extensive experiments have demonstrated the advantages
of our method in generating high-quality swapped faces over several
SOTAs. In the future, we would like to examine different architectures
of $h$ and different training objectives that lead to better preservation
of the source identity and target attributes.

\section{Ethical Considerations}

\subsection{General Ethical Conduct}

The CelebA-HQ dataset \cite{karras2017progressive} is available for
non-commercial research purposes. When using this dataset, we adhere
to the agreements specified by the dataset owners\footnote{https://mmlab.ie.cuhk.edu.hk/projects/CelebA.html}.
Since the data we used are only human face images with no identity
or facial attribute annotations, and since the face images are publicly
available on the Internet, it is unlikely that these data reveal any
personally identifiable information such as name, age, sex, etc.,
or contain private, sensitive information that could be degrading
or embarrassing for some people.

\subsection{Potential Negative Societal Impacts}

Our proposed method is aimed to be a simple yet effective method for
face swapping that could be widely applied to the entertainment industry
and identity-concealing tasks. Despite these good purposes, our method
may be misused to cause societal harms. Some potential scenarios are: 
\begin{enumerate}
\item Counterfeiting a picture of a person $A$ doing bad things by swapping
the identity of the true person in the picture with that of $A$ to
damage the reputation of $A$.
\item Creating a face with a non-existing identity (e.g., face swapping
with multiple identities) and use this identity to do frauds without
being tracked by the authorities.
\end{enumerate}
The first scenario raises a harassment concern while the second one
raises a security concern. Both scenarios are difficult to mitigate
and tend be more serious when our proposed method becomes better.
For the first scenario, a possible solution is building a framework
that allows face swapping only if the target image contains no actions
that break the social and moral norms. For the second scenario, one
solution is maintaining a database of all known faces for verifying
whether a given face exists in the database or not.

\bibliographystyle{ieee_fullname}
\bibliography{latentgan}

\cleardoublepage{}

\appendix

\section{Appendix}

\subsection{Ablation Studies}

\subsubsection{Using different feature layers of the StyleGAN2 generator\label{subsec:trade-off-middle-layer}}

Table~\ref{tab:different-layer-of-gen} shows the quantitative results
when different feature layers are used for computing $\Loss_{\text{feat}}$.
We keep all the loss coefficients as default. We observe that there
is a clear trade-off between style and identity preservation. As we
use a deeper layer for our loss (convs.7), the identity similarity
decreases while the preservation of expression and pose is better.
This is expected as using deeper layer will force the $\tilde{w}$
close to $w^{\t}$. By contrast, using lower layer (convs.3) will
cause style to be not well preserved. The best feature layer, in our
opinion, is convs.5.

\begin{table}
\begin{centering}
\resizebox{.99\columnwidth}{!}{%
\begin{tabular}{ccccc}
\hline 
Feat. layer & ID ($\uparrow$) & Expr. ($\downarrow$) & Pose ($\downarrow$) & FID ($\downarrow$)\tabularnewline
\hline 
\hline 
convs.3 & 0.56 & 7.75 & 8.29 & 5.10\tabularnewline
\emph{convs.5} & \emph{0.49 } & \emph{5.01} & \emph{4.54} & \emph{4.56}\tabularnewline
convs.7 & 0.46 & 4.00 & 3.43 & 4.32\tabularnewline
\hline 
\end{tabular}}
\par\end{centering}
\caption{Quantitative results with different feature layers for computing $\protect\Loss_{\text{feat}}$
(Eq.~\ref{eq:feat_loss}). convs.3, convs.5, convs.7 have feature
maps of size 16$\times$16, 32$\times$32, 64$\times$64, respectively.
The default feature layer is convs.5 highlighted in \emph{italic}.
\label{tab:different-layer-of-gen}}
\end{table}

\subsubsection{Contributions of the loss components}

In Fig.~\ref{fig:ablation-loss}, we show the qualitative results
of our method without $\mathcal{L}_{\text{feat}}$ or $\mathcal{L}_{\text{LPIPS}}$
to examine the contribution of each loss term. Apparently, without
$\mathcal{L}_{\text{feat}},$our method fails to preserve the general
shape of the target face. Meanwhile, without $\mathcal{L}_{\text{LPIPS}}$,
our method fails to maintain the skin color and illumination of the
target face. Therefore, we need both $\mathcal{L}_{\text{feat}}$
and $\mathcal{L}_{\text{LPIPS}}$ to achieve good results.

\begin{figure*}
\begin{centering}
\includegraphics[width=0.99\textwidth]{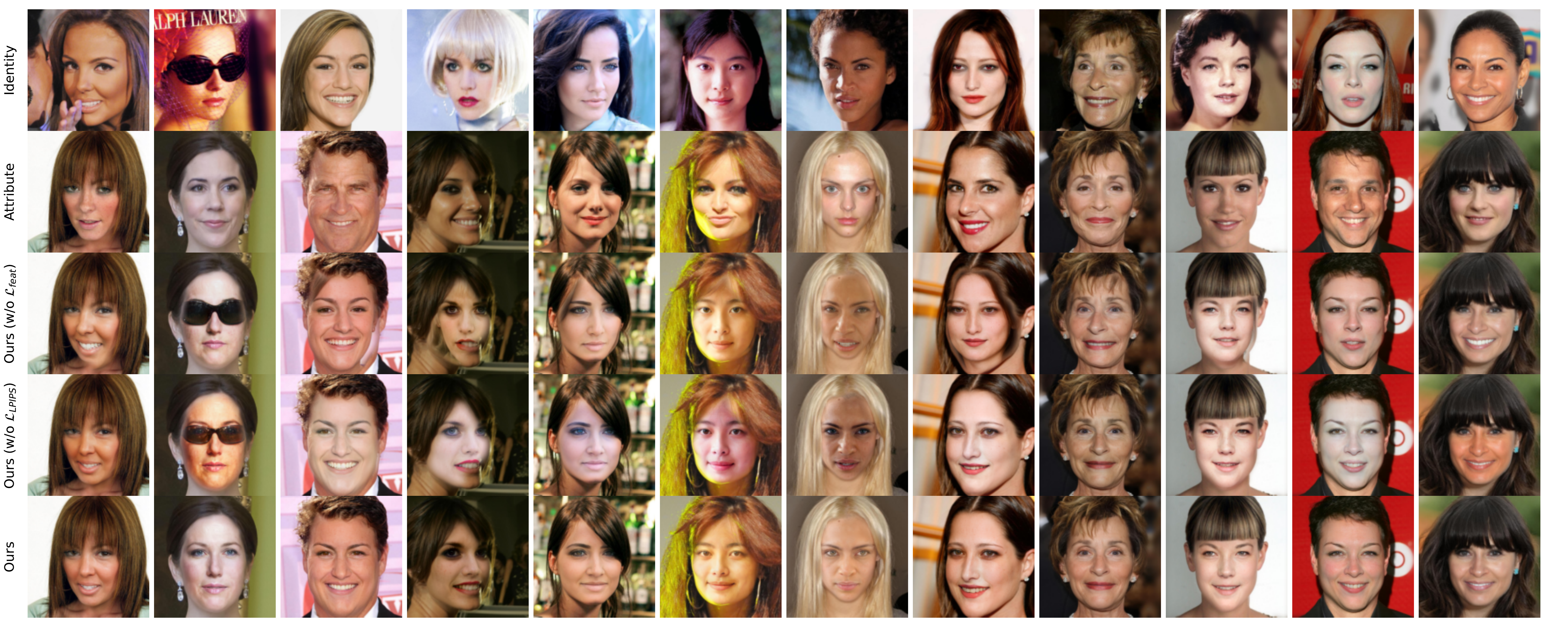}
\par\end{centering}
\caption{Qualitative results of our method without $\protect\Loss_{\text{feat}}$
or $\protect\Loss_{\text{LPIPS}}$.\label{fig:ablation-loss}}
\end{figure*}

\subsection{Other Results}

\subsubsection{More qualitative face swapping results}

We provide more qualitative results of our method in Fig.~\ref{fig:more-qualitative-results}.
Overall, our method preserves the source identity and the target styles
quite well, and can handle occlusions such as wearing glasses. However,
there are cases in which $\Model$ produces undesirable results. For
example, our method is unable to preserve the skin texture of the
target face if it is very different from the counterpart of the source
face.

\begin{figure*}
\begin{centering}
\includegraphics[width=0.99\textwidth]{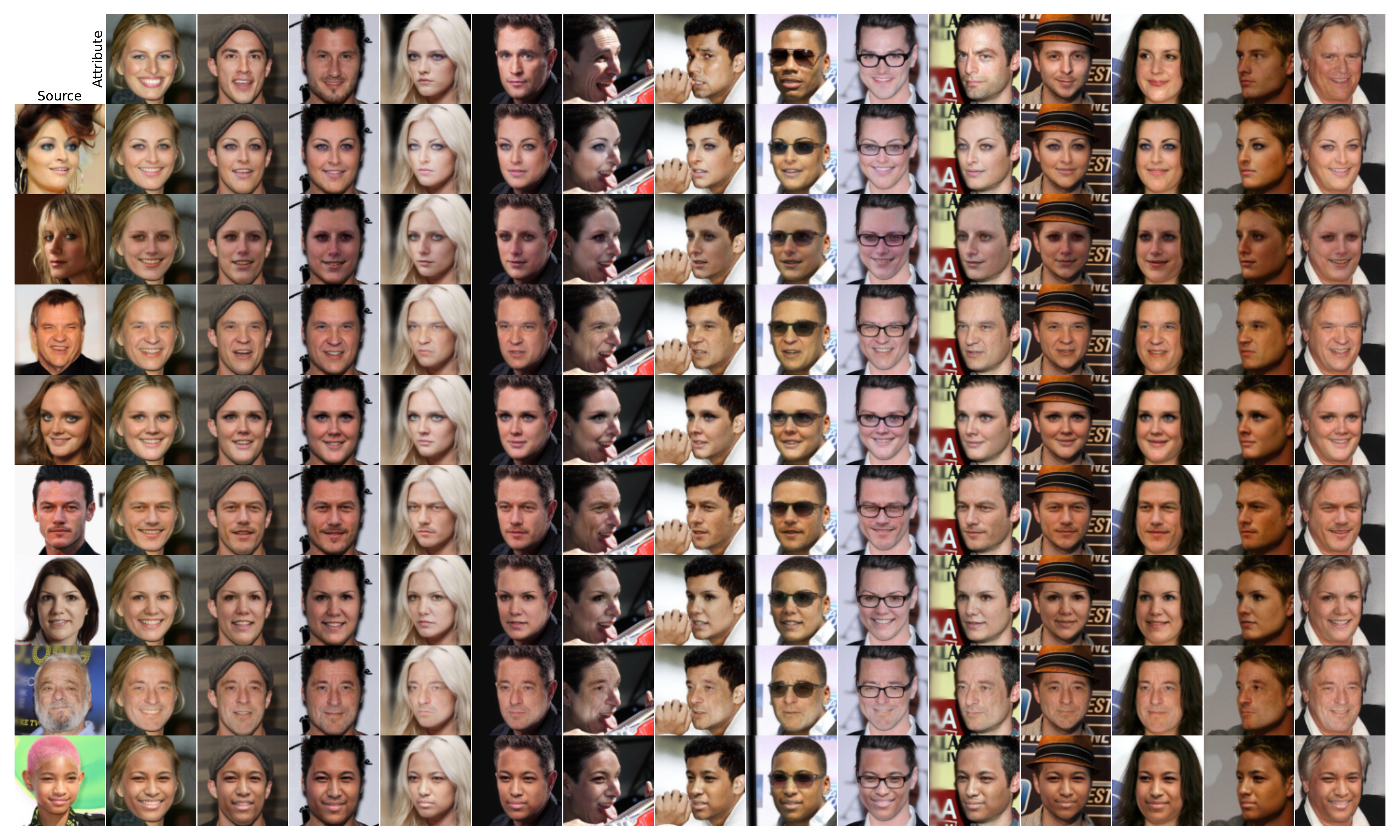}
\par\end{centering}
\caption{More face swapping results of our method with applied masks.\label{fig:more-qualitative-results}}
\end{figure*}

\begin{figure*}
\begin{centering}
\includegraphics[width=0.8\textwidth]{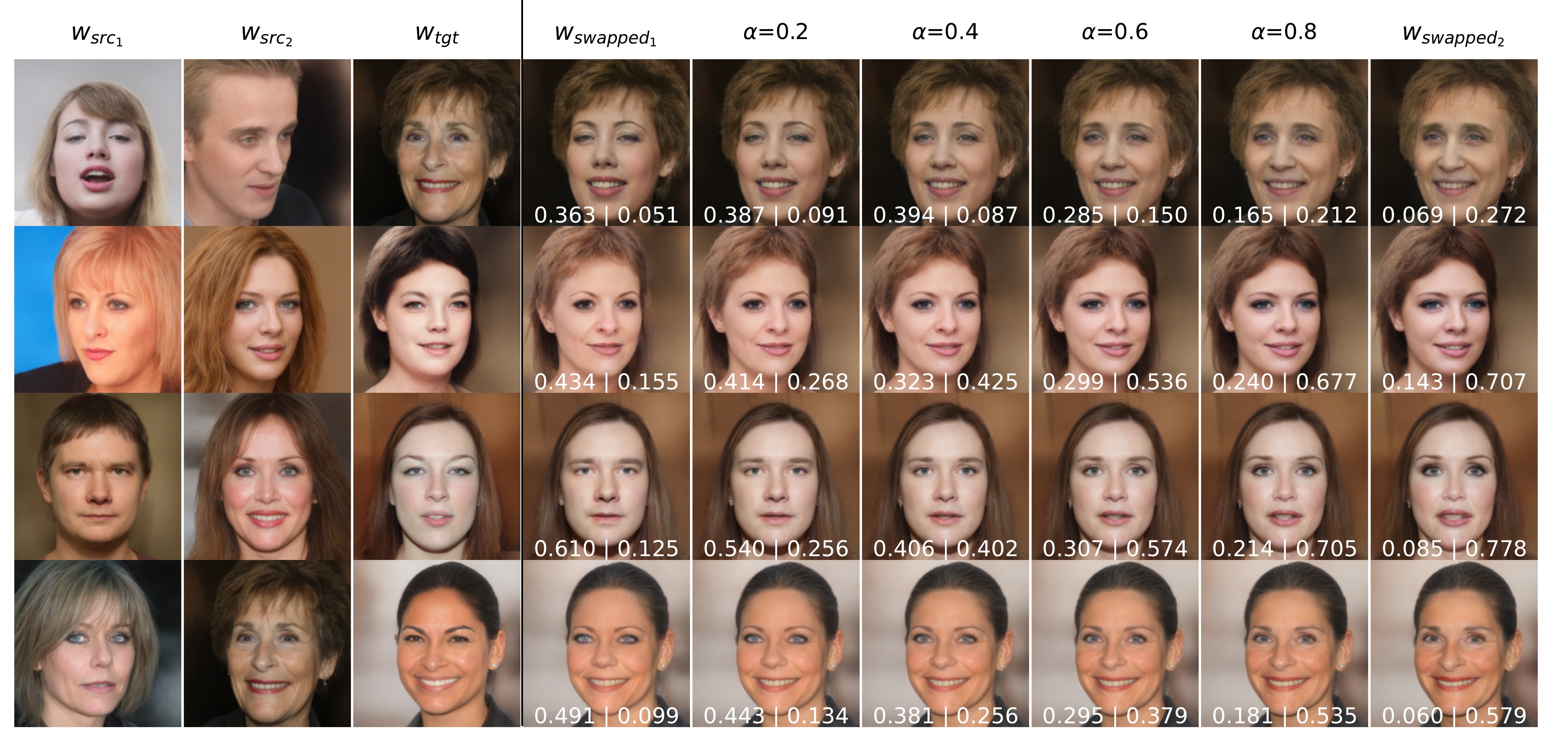}
\par\end{centering}
\caption{Results of combining the identities of two sources ($w^{\protect\s_{1}}$,
$w^{\protect\s_{2}}$) with the style of one target ($w^{\protect\t}$).
The formula of the final swapped latent code $\tilde{w}$ is $\tilde{w}=\left(\alpha w_{\protect\id}^{\protect\s_{1}}+(1-\alpha)w_{\protect\id}^{\protect\s_{2}}\right)+w_{\protect\sty}^{\protect\t}$
with $0\protect\leq\alpha\protect\leq1$. Under each swapped image,
we show two identity similarity scores. The left one corresponds to
the first source, and the right one corresponds to the second source.\label{fig:two_ids_one_sty}}
\end{figure*}

\subsubsection{Face swapping with two identities and one style}

In Fig.~\ref{fig:two_ids_one_sty}, we show some results of combining
the identities of two sources with the style of one target. The hyperparameter
$\alpha$ ($0\leq\alpha\leq1$) controls how much the two sources
affect the final results. When $\alpha$ varies in {[}0, 1{]}, we
can clearly see the identity of the final swapped face interpolates
between the identities of two individual swapped faces while the style
of the final swapped face remains almost unchanged. These results,
again, empirically validate our formula in Eq.~\ref{eq:face_swap_multi_3}.
They also indicate that our method truly separates the identity from
styles. 

\begin{figure*}
\begin{centering}
\includegraphics[width=0.8\textwidth]{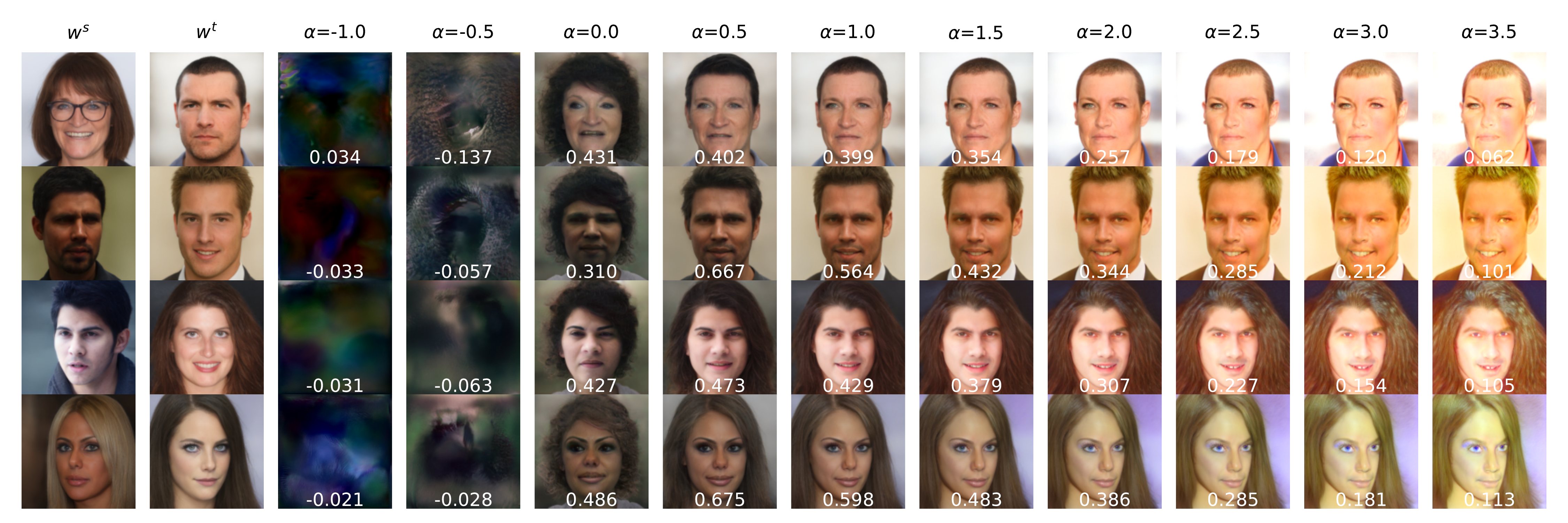}
\par\end{centering}
\caption{Generated swapped faces when the style code is scaled by $\alpha$.
The formula of the swapped latent code is $\tilde{w}=w_{\protect\id}+\alpha w_{\protect\sty}$.
The identity similarity score is shown under each swapped image.\label{fig:scale_sty}}
\end{figure*}

\begin{figure*}
\begin{centering}
\includegraphics[width=0.8\textwidth]{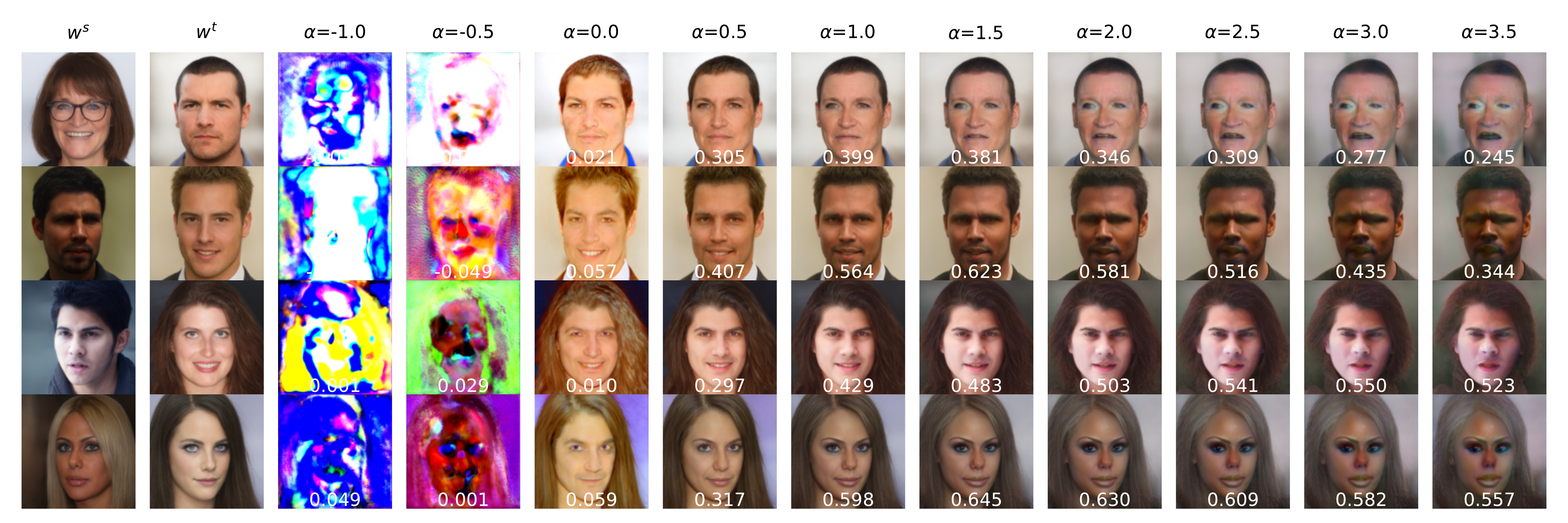}
\par\end{centering}
\caption{Generated swapped faces when the identity code is scaled by $\alpha$.
The formula of the swapped latent code is $\tilde{w}=\alpha w_{\protect\id}+w_{\protect\sty}$.
The identity similarity score is shown under each swapped image.\label{fig:scale_id}}
\end{figure*}

\subsubsection{Scaling identity and style latent codes}

In Figs.~\ref{fig:scale_sty} and \ref{fig:scale_id}, we visualize
swapped faces with the style code and the identity code being scaled,
respectively. It is clear that the swapped faces do not look real
when the style code or the identity code are scaled with negative
or very large $\alpha$. This means the swapped latent code $\tilde{w}$
in these cases no longer resides in the latent space $\W^{+}$. Negative
$\alpha$ causes $\tilde{w}$ to be much more off-the-space-$\W^{+}$
than very large $\alpha$. Scaling style codes leads to the decrease
in identity similarity scores since the swapped code $\tilde{w}$
is now dominated by the scaled style code $\alpha w_{\sty}$. By contrast,
scaling identity codes cause the identity similarity to change slightly,
sometimes even increase compared to not scaling. This means i) the
swapped code $\tilde{w}$ is now dominated by the scaled identity
code $\alpha w_{\sty}$, and ii) the identity code computed by our
method truly reflects the identity specified by the face recognition
model $R$.

\subsubsection{Semantically meaningful linear transformations in the identity and
style spaces}

In this subsection, we empirically examine that with our arithmetic
decomposition of $\W^{+}$ into the identity and style subspaces,
whether we can perform linear transformation in one subspace without
affecting the other or not. Mathematically, it means whether $\left(w_{\id}+\alpha\Delta_{\id}\right)+w_{\sty}$
still has the same styles as $w_{\id}+w_{\sty}$ or not ($\alpha\in\Real$,
$\Delta_{\id}$ is a semantically meaningful editing direction in
$\W_{\id}^{+}$). Note that this only holds true if $\W_{\id}^{+}$
and $\W_{\sty}^{+}$ are fully disentangled as specified in Remark~\ref{remark-1}.
Otherwise, we can write $\left(w_{\id}+\alpha\Delta_{\id}\right)+w_{\sty}$
as $w_{\id}+\left(\alpha\Delta_{\id}+w_{\sty}\right)$, and thus,
$w_{\ensuremath{\sty}}$ will be affected by $\alpha\Delta_{\id}$.

We follow the strategy in GANSpace \cite{harkonen2020ganspace} to
find a set of editing directions $\left\{ \Delta_{\id}\right\} $
($\left\{ \Delta_{\sty}\right\} $) in $\W_{\id}^{+}$ ($\W_{\sty}^{+}$).
Specifically, we compute the principal components from the identity
(style) latent codes of the training face images via PCA, and regard
these principal components as editing directions. In Fig.~\ref{fig:pca_change_id}
(Fig.~\ref{fig:pca_change_sty}), we show the top three principal
components in $\W_{\id}^{+}$ ($\W_{\sty}^{+}$). We observe that
gender, age, skin color are factors that cause large changes of identity
while hair styles, viewpoints, wearing glasses are those that cause
large changes of style. We also see that changing style does not affect
identity much in terms of visual perception (Fig.~\ref{fig:pca_change_sty}),
which suggests that our method can effectively separate identity from
style.

\begin{figure*}
\begin{centering}
\includegraphics[width=0.9\textwidth]{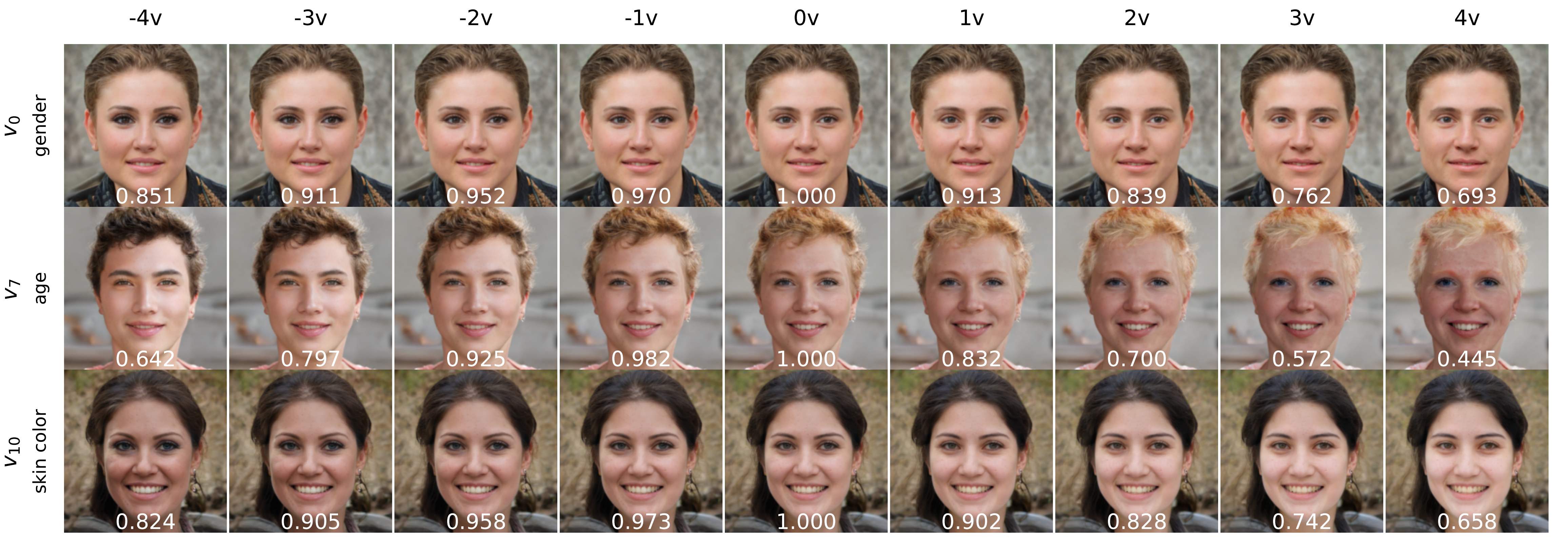}
\par\end{centering}
\caption{The top three principal components in the identity space $\protect\W_{\protect\id}^{+}$.\label{fig:pca_change_id}}
\end{figure*}

\begin{figure*}
\begin{centering}
\includegraphics[width=0.9\textwidth]{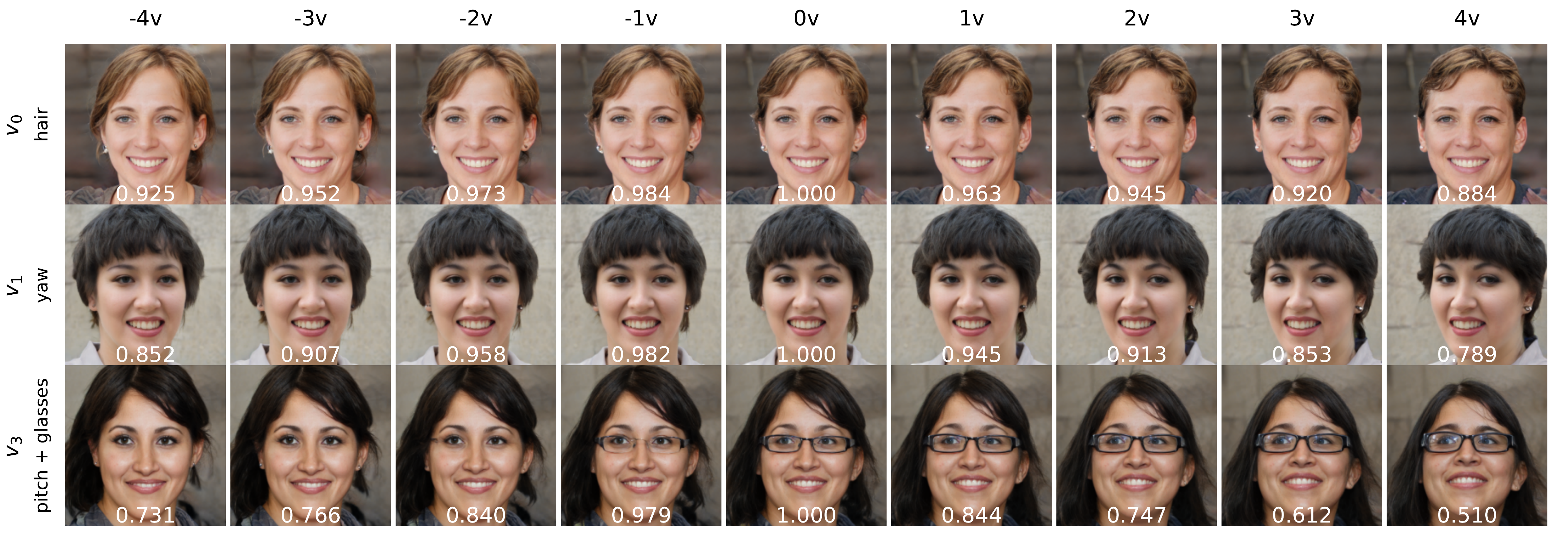}
\par\end{centering}
\caption{The top three principal components in the style space $\protect\W_{\protect\sty}^{+}$.\label{fig:pca_change_sty}}
\end{figure*}

\end{document}